\documentclass[conference]{IEEEconf}
\IEEEoverridecommandlockouts
\usepackage[noadjust]{cite}
\usepackage{amsmath,amssymb,amsfonts}
\usepackage{algorithmic}
\usepackage{graphicx}
\usepackage{textcomp}
\usepackage[dvipsnames]{xcolor}
\usepackage{dsfont}
\usepackage{amsthm}
\usepackage{xcolor}

\usepackage{mathtools}

\usepackage{hyperref}
\usepackage{cleveref}

\usepackage[normalem]{ulem}

\newtheorem{definition}{Definition}

\let \labelindent \relax
\usepackage{enumitem}
\usepackage{subcaption}
\usepackage[font=small]{caption}

\usepackage{accents}

\newtheorem{theorem}{Theorem}[section]
\newtheorem{corollary}{Corollary}[theorem]
\newtheorem{lemma}[theorem]{Lemma}

\newtheorem{assumption}{Assumption}

\usepackage{bbm}

\def\BibTeX{{\rm B\kern-.05em{\sc i\kern-.025em b}\kern-.08em
    T\kern-.1667em\lower.7ex\hbox{E}\kern-.125emX}}
\begin{document}

\title{
On Dominant Manifolds in Reservoir Computing  Networks 
\thanks{}
}

\author{Noa~Kaplan, Alberto Padoan, Anastasia~Bizyaeva
\thanks{N. Kaplan is with the Department of Computer Science at Cornell University, Ithaca, NY, USA 14853, \tt{nk675@cornell.edu}}
\thanks{ A.~Padoan is with the Department of Electrical and Computer Engineering, University of British Columbia, Vancouver, BC V6T\,1Z4, Canada, \tt{alberto.padoan@ubc.ca}.}%
\thanks{A. Bizyaeva is with the Sibley School of Mechanical and Aerospace Engineering at Cornell University, Ithaca, NY, USA, 14853, \tt{anastasiab@cornell.edu}}
}%

\maketitle

\begin{abstract} 
Understanding how training shapes the geometry of recurrent network dynamics is a central problem in time-series modeling. We study the emergence of low-dimensional dominant manifolds in the training of Reservoir Computing (RC) networks for temporal forecasting tasks. 
For a general linear continuous-time reservoir in the infinite-data limit, we show that the training data generate an invariant subspace of the trained reservoir,
whose dimension equals the number of dominant modes. 
We then specialize to a simplified diagonal linear reservoir, where we link the dominant eigenvalues and eigenvectors 
to the spectrum of a backward Dynamic Mode Decomposition matrix, which yields a finite-dimensional approximation of the backward-time Koopman operator of the system generating the training data. 
%
%
We illustrate the emergence of these dominant modes during training in simulation,  
and 
discuss how the analysis may be extended 
to nonlinear RC via tangent dynamics and differential $p$-dominance.
\end{abstract}


\section{Introduction}

Understanding how training shapes neural network dynamics is a key problem in machine learning and control. In this work, we study how training shapes the emergence of low-dimensional invariant manifolds in Reservoir Computing (RC) neural networks, a machine learning framework widely used for time-series prediction, system identification, and control \cite{Jaeger2004, Pathak2018, Griffith2019, Yan2024}. 
The original RC framework stems from Echo State Networks (ESNs) \cite{Jaeger2001} and Liquid State Machines (LSMs) \cite{Maass2002}.
At its core, RC combines the expressive dynamics of a recurrent neural network with a simple, efficient, and recursively implementable training step, which synthesizes a feedback controller for the RC system via least-squares optimization.
RC is also considered a compelling mechanistic model for brain computation \cite{maass2016searching,Bassett2020}, a domain where emergence of low-dimensional manifolds is thought to play a key role  \cite{Perich2025}. 
Our goal is to characterize the role and properties of such manifolds in RC using systems theory. 

To study how dominant manifolds are shaped by training, we consider 
a general linear reservoir in the infinite-data limit, in a time-series prediction task.
we show that the training data generate an invariant subspace of the trained reservoir, where the number of dominant modes is its dimension.
Next, we perform an explicit spectral analysis of a trained simplified diagonal linear RC system, and characterize its dominant eigenvalues and eigenvectors in terms of the training data, relating them to the spectral features of a backward  Dynamic Mode Decomposition (DMD) operator acting on the filtered input snapshots.
This shows a connection between trained RC dynamics and an
approximation of the backward Koopman evolution of
observables of the original dynamical system generating the training inputs. 
We then characterize these dominant modes through dominance theory \cite{forni2019differential}.
Finally, we discuss how these insights can potentially extend to nonlinear reservoirs through the lens of dominance theory, where differential analysis along trajectories provides a natural framework for studying the emergence of low-dimensional dominant invariant manifolds.

The paper is structured as follows. In Section \ref{sec:background} we review preliminaries on reservoir computing and dominance theory.  
In Section~\ref{sec:gen_lin_RC} we study dominance in general trained linear reservoirs.   
In Section \ref{sec:diag_lin_RC} we specialize to diagonal linear reservoirs, where we derive an explicit spectral characterization and establish the connection to Dynamic Mode Decomposition and the Koopman operator.  
In Section \ref{sec:nonlinear_RC} we discuss how these insights can generalize to the nonlinear setting through the lens of differential $p$-dominance. 
In Section \ref{sec:discussion} we summarize and discuss future directions.  


\section{Background} \label{sec:background}

\subsection{Notation and Preliminaries}

Vectors and matrices are denoted by lowercase and uppercase letters, respectively. The Euclidean space of dimension $n$ is denoted by $\mathbb{R}^n$, and $\| \cdot \|$ denotes the Euclidean norm for vectors and the Frobenius norm for matrices. We define the Kronecker delta as $\delta_{ij}=1$ if $i=j$, and $\delta_{ij}=0$ otherwise. The identity matrix is denoted by $I$, and $\mathbf{1}_n \in \mathbb{R}^n$ denotes the column vector with entries that are all equal to $1$. The transpose and conjugate transpose of a matrix $A$ are denoted by
$A^\top$ and $A^*$, respectively, and its Moore-Penrose pseudoinverse is denoted by $A^\dagger$.
We denote the rank, kernel, and image of a matrix $A$ by
$\operatorname{rank}(A)$, $\ker(A)$, and $\operatorname{Im}(A)$, respectively.
We write $\lambda_i[A]$ for the $i$th eigenvalue of $A$, 
and $\Re(\lambda)$ and $\Im(\lambda)$ for the real and imaginary parts, respectively, of a complex number $\lambda$.
For symmetric matrices $A$
, we write $A \succeq 0$ if $A$ is positive semidefinite, and $A \succ 0$ if $A$ is positive definite.
For a subspace $\mathcal{S}$, $\mathcal{S}^{\perp}$ denotes its orthogonal complement.
The symbol $\operatorname{diag}(x)$ denotes the diagonal matrix with diagonal entries given by the entries of the vector $x$, i.e.\ $[\operatorname{diag}(x)]_{ij} = x_i\,\delta_{ij}$.
The inertia of a symmetric matrix $P$ is denoted by $(p,0,n-p)$, meaning that $P$ has $p$ negative eigenvalues, $0$ zero eigenvalues, and $n-p$ positive eigenvalues.
For matrices $A$ and $B$, the nonzero eigenvalues of $AB$ and
$BA$ coincide, counting algebraic multiplicities.

\subsection{Reservoir Computing}
\label{sec:RC}

We study a continuous-time input affine reservoir computing model, with equations of the untrained reservoir reading
\begin{equation}
\begin{aligned}
\dot{r}(t) & =- \gamma r(t)+ \gamma f(Wr(t)+\sigma) + \gamma W_{\text{in}}u(t),
\end{aligned}
\label{eq:RC_before_training}
\end{equation}
where 
$r(t)\in\mathbb{R}^{n}$ is the reservoir state, $u(t)\in\mathbb{R}^{d}$ is the input, 
$\gamma>0$ is the timescale inverse constant,
$W_{\text{in}}\in\mathbb{R}^{n\times d}$ is the input matrix, 
$f:\mathbb{R}^{n}\to\mathbb{R}^{n}$ is a smooth nonlinear activation function acting entrywise on its vector of inputs,
$\sigma\in\mathbb{R}^{n}$ is a constant bias, and
$W\in\mathbb{R}^{n\times n}$ is a matrix of recurrent weights.

A reservoir is trained to forecast an input signal $u(t)$ observed over a time interval $t \in [0,T]$, typically stemming from observations of a dynamical system. During training,  \eqref{eq:RC_before_training} is continuously forced by the training input stream $u(t)$ and its response $r(t)$ is measured. During training the reservoir parameters $(W_{\text{in}},W, \gamma,\sigma)$ are fixed and initialized to satisfy the \textit{echo state property}
\cite{Jaeger2001}.  

The goal of reservoir training is to find a readout matrix ${W_{\text{out}}\in\mathbb{R}^{d\times n}}$ such that the output
\begin{equation}
    y(t) = W_{\text{out}}r(t) \label{eq:output}
\end{equation}
approximates the observed input as $y(t) \approx u(t)$ for ${t \in [0,T]}$.  
This is solved via a least squares minimization
\begin{equation}
\min_{W_{\text{out}}\in\mathbb{R}^{d\times n} }
\int_0^T \|W_{\text{out}} r(t) - u(t)\|^{2} dt 
\label{eq:min_W_out_continuous}
\end{equation}
\noindent 
whose minimum-norm solution is 
\begin{equation}
W_{\text{out}} =
C_{ur} C_{rr}^\dagger ,
\label{eq:Wout_continuous}
\end{equation}
where the uncentered second-moment matrices are
\begin{equation}
    C_{ur} = \frac{1}{T} \int_0^T u(t)r(t)^\top\,dt
    , \quad 
    C_{rr} = \frac{1}{T} \int_0^T r(t)r(t)^\top\,dt
    .
    \label{eq:Cur_Crr}
\end{equation}

For observations of $r(t)$ and $u(t)$ collected as discrete samples, this is solved via a least squares minimization
\begin{equation}
\min_{W_{\text{out}}\in\mathbb{R}^{d\times n} }\ \|W_{\text{out}} R - U\|^{2},
\label{eq:min_W_out_discrete}
\end{equation}
and the output map is therefore defined by the minimum-norm least-squares solution
\begin{equation}
W_{\text{out}} = U R^{\top} (R R^{\top})^{\dagger} = U R^\dagger,
\label{eq:W_out_discrete}
\end{equation}
where snapshots of the input and the reservoir states are collected at times $t_1,\dots,t_m \in [0,T]$ in the matrices
\begin{equation} \label{eq:input_data}
U=[u(t_1) \cdots u(t_{m})]
\in\mathbb{R}^{d\times m}
\end{equation}
and 
\begin{equation} \label{eq:reservoir_data}
R=[r(t_1) \cdots r(t_m)]
\in\mathbb{R}^{n\times m}.
\end{equation}
In this paper, we assume that the training observations \eqref{eq:input_data},\eqref{eq:reservoir_data} are made at regular time intervals $t_{j+1} - t_j := h $.

To forecast the time-series $u(t)$ for $t > T$, we close the loop of the RC using the trained output \eqref{eq:output},
yielding the autonomous system   
\begin{equation}
\begin{aligned}
\dot{r}(t) &= - \gamma r(t)+ \gamma f(Wr(t) +\sigma)+ \gamma W_{\text{in}}W_{\text{out}} r(t),
\end{aligned}
\label{eq:RC_after_training}
\end{equation}
which is simulated forward in time for $t > T$.

\subsection{$p$-Dominance}
\label{sec:p-dominance}

$p$-Dominance \cite{forni2019differential} 
characterizes systems with
asymptotic behavior that is governed by a low-dimensional dominant subspace, for which
exactly $p$ directions to dominate the long-term dynamics, while the remaining
$n-p$ directions are uniformly contracting.

\begin{definition}[$p$-dominance]
\label{def:p_dom}
A linear system $\dot{x}=Ax$ is $p$-dominant with rate $\lambda\ge 0$ if there exists a symmetric matrix $P$ with inertia $(p,0,n-p)$ such that
\begin{equation}
A^\top P + P A \le -2\lambda P - \varepsilon I
\end{equation}
\noindent 
for some $\varepsilon\ge 0$; the property is strict if $\varepsilon>0$.
\end{definition}\noindent
For $\varepsilon>0$, such a $P$ exists if and only if $A+\lambda I$ has exactly $p$ eigenvalues with positive real part and $n-p$ eigenvalues with negative real part, counting multiplicity \cite[Prop.~1]{forni2019differential}. 

\begin{definition}[differential $p$-dominance]
\label{def:diff_p_dom}
A nonlinear autonomous system
$\dot{x} = f(x)$ with $x \in \mathbb{R}^n$
and smooth $f$ is (differentially) $p$-dominant with rate $\lambda \ge 0$
if there exists a constant storage matrix $P = P^\top$ with inertia $(p,0,n-p)$ such that
the prolonged system
\begin{equation}
\begin{aligned}
\dot{x} &= f(x),
\qquad
\delta \dot{x} = \partial f(x)\,\delta x 
\end{aligned}
\label{eq:prolonged_system}
\end{equation}
satisfies the conic constraint
\begin{equation}
\begin{bmatrix}
\delta \dot{x} \\
\delta x
\end{bmatrix}^\top \!
\begin{bmatrix}
0 & P \\
P & 2\lambda P + \varepsilon I
\end{bmatrix} \!
\begin{bmatrix}
\delta \dot{x} \\
\delta x
\end{bmatrix}
\le 0, 
 \ \forall\, (x,\delta x) \in \mathbb{R}^n\times\mathbb{R}^n, 
\label{eq:p_dominance}
\end{equation}
for some $\varepsilon > 0$.
\end{definition}

From~\eqref{eq:prolonged_system} and the differential conic constraint \eqref{eq:p_dominance},
$p$-dominance requires that
\begin{equation}
\partial f(x)^\top P + P\,\partial f(x) \;\leq\; -2\lambda P - \varepsilon I,
\qquad \forall\, x \in \mathbb{R}^n .
\label{eq:p_dominance_LMI}
\end{equation}
\noindent 
Intuitively, this condition enforces exponential contraction, at rate at least $\lambda$, of all infinitesimal displacements transversal to a $p$-dimensional dominant subspace.
For strictly $p$-dominant systems with $p\in\{0,1,2\}$, every bounded trajectory converges to a simple attractor \cite[Cor.~1]{forni2019differential}: for ${p=0}$, to a unique equilibrium; for $p=1$, to an equilibrium (possibly one of several, i.e., multistability); for ${p=2}$, to an equilibrium, a set of equilibria and connecting arcs, or a limit cycle. 


\section{Dominant Modes in Trained Linear Reservoirs}
\label{sec:gen_lin_RC}


First we consider a linear RC model obtained by specializing \eqref{eq:RC_before_training} to a linear activation function $f(x) = x$, and choosing parameters $\gamma = 1$ and $\sigma = \sigma_b \mathbf{1}_n$, yielding
\begin{equation}
\begin{aligned}
\dot{r}(t) &= (- I + W) r(t) + W_{\mathrm{in}} u(t) + \sigma_b \mathbf{1}_n,
\label{eq:RC_linear_untrained}
\end{aligned}
\end{equation}
where $\sigma_b \in \mathbb{R}$, and $\mathbf{1}_n \in \mathbb{R}^n$ is the vector of ones.
After training, the closed-loop reservoir is a linear, time-invariant dynamical system
\begin{equation}
    \dot{r}(t) = \big(-I + W + W_{\mathrm{in}}W_{\mathrm{out}}\big) r(t) + \sigma_b \mathbf{1}_n. \label{eq:RC_linear_trained}
\end{equation}
\noindent 
Let $J_0 := -I + W$ and $J_T := J_0 + W_{\mathrm{in}} W_{\mathrm{out}}$ denote the state matrices of the untrained reservoir \eqref{eq:RC_linear_untrained} and of the trained reservoir \eqref{eq:RC_linear_trained}, respectively.  $J_0$ is chosen Hurwitz, such that the untrained reservoir is globally exponentially stable in the absence of driving input.  Equivalently, the untrained reservoir is strictly $0$-dominant with rate $\lambda=0$. 

In this section we show that, in the infinite-data limit, training turns the $0$-dominant untrained reservoir into a strictly $q$-dominant one, where $0\leq q\leq n$ is determined by the training data. 
First, we prove a useful uncentered second-moment identity for the linear RC that reveals an invariant subspace generated by the training data. We then use this to characterize the dominant eigenvalues of the trained reservoir.

\begin{lemma}[Second-moment identity]
    Consider the driven linear reservoir \eqref{eq:RC_linear_untrained} and its trained counterpart \eqref{eq:RC_linear_trained}, with $\sigma_b=0$. 
    Define $r_u(t)$ as the solution of \eqref{eq:RC_linear_untrained} for $t \in [0,T]$.
    Suppose the output matrix is given by \eqref{eq:Wout_continuous}, where
    \begin{equation}
    \begin{aligned}
        C_{rr} &= \lim_{T \to \infty} 
        \frac{1}{T}
        \int_0^T r_u(t) r_u(t)^{\top} dt, 
        \\
        C_{ur} &= \lim_{T\to \infty} 
        \frac{1}{T}
        \int_{0}^T u(t) r_u(t)^{\top} dt, \\ \label{eq:Cur_Crr_limit}
    \end{aligned}
    \end{equation}%
    assuming 
    that the limits in \eqref{eq:Cur_Crr_limit} exist. 
    Then 
    \begin{equation}
        J_T C_{rr} + C_{rr} J_T^{\top} = 0
        \label{eq:JT_Sylvester}
    \end{equation}
    and $\operatorname{Im}C_{rr}$ is an invariant subspace of $J_T$ 
    of dimension $\operatorname{rank}(C_{rr})$. 
\label{lem:covariance_identity}
\end{lemma}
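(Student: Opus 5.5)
The argument rests on the Lyapunov-type identity that the second-moment matrix satisfies along the driven trajectory; the least-squares structure of $W_{\mathrm{out}}$ then absorbs the input cross-term. With $\sigma_b=0$, the driven reservoir is $\dot r_u = J_0 r_u + W_{\mathrm{in}} u$. Differentiating $r_u r_u^\top$ gives
\begin{equation*}
\frac{d}{dt}\big(r_u r_u^\top\big) = J_0 r_u r_u^\top + r_u r_u^\top J_0^\top + W_{\mathrm{in}} u r_u^\top + r_u u^\top W_{\mathrm{in}}^\top .
\end{equation*}
Integrate over $[0,T]$ and divide by $T$. The left side becomes $\big(r_u(T)r_u(T)^\top - r_u(0)r_u(0)^\top\big)/T$. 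In the limit this yields
\begin{equation*}
J_0 C_{rr} + C_{rr}J_0^\top + W_{\mathrm{in}} C_{ur} + C_{ur}^\top W_{\mathrm{in}}^\top = 0 .
\end{equation*}
This step requires the boundary term to vanish. I expect this to be the main obstacle, since it is not literally contained in the hypothesis that the limits exist. I would handle it as follows. Existence of $C_{rr}=\lim \frac1T\int_0^T r_ur_u^\top$ means $\frac1T\int_0^T\|r_u\|^2$ converges, hence $\|r_u(T)\|^2/T$ has liminf zero. Alternatively, since $J_0$ is Hurwitz and $u$ is bounded (standing echo-state assumptions), $r_u$ is bounded and the boundary term is $O(1/T)$. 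I would state boundedness of $u$ as the implicit assumption and use the latter route.

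Next, eliminate the cross terms using the least-squares structure. Since $W_{\mathrm{out}} = C_{ur}C_{rr}^\dagger$, we have $W_{\mathrm{in}}W_{\mathrm{out}}C_{rr} = W_{\mathrm{in}} C_{ur} C_{rr}^\dagger C_{rr}$, and $C_{rr}^\dagger C_{rr}$ is the orthogonal projector onto $\operatorname{Im} C_{rr}$. The key claim is $C_{ur}C_{rr}^\dagger C_{rr} = C_{ur}$, i.e.\ $\ker C_{rr}\subseteq \ker C_{ur}$. To prove it, take $v\in\ker C_{rr}$. Then $v^\top C_{rr} v = \lim\frac1T\int (v^\top r_u)^2 = 0$. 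By Cauchy--Schwarz, for any $w$,
\begin{equation*}
\Big|\frac1T\int_0^T (w^\top u)(r_u^\top v)\,dt\Big| \le \Big(\frac1T\int_0^T (w^\top u)^2\,dt\Big)^{1/2}\Big(\frac1T\int_0^T (v^\top r_u)^2\,dt\Big)^{1/2}.
\end{equation*}
This tends to $0$ provided $\frac1T\int\|u\|^2$ stays bounded, which again holds under bounded input. Hence $C_{ur}v=0$. It follows that $W_{\mathrm{in}}W_{\mathrm{out}}C_{rr} = W_{\mathrm{in}}C_{ur}$, and by symmetry of $C_{rr}$ also $C_{rr}W_{\mathrm{out}}^\top W_{\mathrm{in}}^\top = C_{ur}^\top W_{\mathrm{in}}^\top$. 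Substituting into the averaged identity gives $(J_0+W_{\mathrm{in}}W_{\mathrm{out}})C_{rr} + C_{rr}(J_0+W_{\mathrm{in}}W_{\mathrm{out}})^\top = 0$, which is \eqref{eq:JT_Sylvester}.

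For invariance, rewrite \eqref{eq:JT_Sylvester} as $J_T C_{rr} = -C_{rr}J_T^\top$. For any $x = C_{rr}z \in \operatorname{Im}C_{rr}$ we then get $J_T x = C_{rr}(-J_T^\top z)\in \operatorname{Im}C_{rr}$. So $\operatorname{Im}C_{rr}$ is $J_T$-invariant, and its dimension is $\operatorname{rank}(C_{rr})$ by definition.
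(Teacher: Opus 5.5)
Your proposal is correct and follows the paper's proof essentially step for step: you average $\frac{d}{dt}(r_u r_u^\top)$ and kill the boundary term by boundedness of $r_u$, prove $\ker C_{rr}\subseteq\ker C_{ur}$ by Cauchy--Schwarz under bounded $u$ so that $C_{ur}=W_{\mathrm{out}}C_{rr}$, and read off invariance from $J_T C_{rr}=-C_{rr}J_T^\top$. Like the paper, you rely on boundedness of $u$ and $r_u$ as an implicit standing assumption, and you are right to flag this since it is not among the lemma's stated hypotheses.
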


\begin{proof}
Since $r_u$ is bounded, we have
\begin{equation}
\begin{aligned}
0
&=
\lim_{T\to\infty}\frac{1}{T}
\int_0^T
\frac{d}{dt}\left(r_u r_u^{\top}\right)dt 
\\
&=
J_0C_{rr}+W_{\mathrm{in}}C_{ur}
+C_{rr}J_0^{\top}+C_{ur}^{\top}W_{\mathrm{in}}^{\top}.
\end{aligned}
\label{eq:covariance_intermediate}
\end{equation}
We now show that $\ker C_{rr}\subseteq\ker C_{ur}$ and find an expression for $C_{ur}$. Let $v\in\ker C_{rr}$. Then
\begin{equation}
0
=
v^{\top}C_{rr}v
=
\lim_{T\to\infty}
\frac1T\int_0^T |r_u(t)^{\top}v|^2dt.
\end{equation}
Using the triangle inequality and then Cauchy-Schwarz inequality, we have
\begin{equation}
\begin{aligned}
\left\| C_{ur}v \right\|
& 
=
\left\|
\frac1T\int_0^T
u(t)r_u(t)^{\top}v\,dt
\right\| \\
& \leq
\frac1T\int_0^T
\|u(t)\||r_u(t)^{\top}v|\,dt \nonumber
\\
&
\leq
\left(
\frac1T\int_0^T\|u(t)\|^2dt
\right)^{1/2} \! 
\left(
\frac1T\int_0^T|r_u(t)^{\top}v|^2dt
\right)^{1/2} \! .
\end{aligned}
\end{equation}
Since $u(t)$ is bounded, the right-hand side converges to zero, and thus $C_{ur}v=0$.
By the properties of the Moore-Penrose inverse, $C_{rr}^{\dagger}C_{rr}$ is the orthogonal projector onto $(\ker C_{rr})^{\perp}$, the set of vectors orthogonal to every vector in $\ker C_{rr}$. 
Since $\ker C_{rr}\subseteq\ker C_{ur}$, it follows that
$C_{ur}=C_{ur}C_{rr}^{\dagger}C_{rr}$. Therefore, we have
\begin{equation}
C_{ur}=W_{out}C_{rr},
\qquad
C_{ur}^{\top}=C_{rr}W_{out}^{\top}.
\end{equation}
Substituting into \eqref{eq:covariance_intermediate} gives \eqref{eq:JT_Sylvester}.

We notice that \eqref{eq:JT_Sylvester} has the form of the homogeneous Sylvester equation.
Let $x\in\operatorname{Im}(C_{rr})$, then there exists $v$ such that $x=C_{rr}v$, and hence
\begin{equation}
J_Tx
=
J_TC_{rr}v
=
-C_{rr}J_T^{\top}v
\in\operatorname{Im}(C_{rr}).
\end{equation}
Thus 
$J_T\operatorname{Im}(C_{rr})
\subseteq
\operatorname{Im}(C_{rr})$,
and it follows that $\operatorname{Im}(C_{rr})$ is an invariant subspace of $J_T$, with dimension $\operatorname{rank}(C_{rr})$.
\end{proof}


\begin{assumption}
\label{ass:symmetrized_J0} 
There 
exists $\lambda>0$ such that for every $x\in \operatorname{Im}(C_{rr})^{\perp}\setminus\{0\}$,
\begin{equation}
x^{\top}
\frac{J_0+J_0^{\top}}{2}
x
<
-\lambda \|x\|^2.
\end{equation}
\end{assumption}


\begin{theorem}[Generation of a dominant subspace after training]
\label{thm:lin_res_dom_subspace}
Consider the trained reservoir \eqref{eq:RC_linear_trained}, with
$J_T= -I + W + W_{\mathrm{in}} W_{\mathrm{out}}$ and $\sigma_b=0$, under Assumption \ref{ass:symmetrized_J0}.
Define $r_u(t)$ as the solution of \eqref{eq:RC_linear_untrained} for $t \geq 0$.
Suppose the output map $W_{\text{out}}$ is given by
\eqref{eq:Wout_continuous} and \eqref{eq:Cur_Crr_limit},
assuming 
that the limits in 
Let $q=\operatorname{rank}(C_{rr})\leq n$.
Then, exactly $q$ eigenvalues of $J_T$, counting algebraic multiplicity, are dominant with rate $\lambda$, and satisfy
\begin{equation}
\operatorname{Re}(\lambda_{\rm dom}[J_T]) = 0,  
\label{eq:q-dom_Evals_JT}
\end{equation}
i.e. the $n - q$ non-dominant eigenvalues satisfy $\operatorname{Re}(\lambda_{\rm nd}[J_T]) < - \lambda$.
\end{theorem}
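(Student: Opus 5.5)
The plan is to use Lemma~\ref{lem:covariance_identity} to block-triangularize $J_T$ along the invariant subspace $\mathcal{S}:=\operatorname{Im}(C_{rr})$. Then analyse the two diagonal blocks separately: the one on $\mathcal{S}$ via the Sylvester identity \eqref{eq:JT_Sylvester}, the one on $\mathcal{S}^\perp$ via Assumption~\ref{ass:symmetrized_J0}.

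\textbf{Step 1 (block decomposition).} Since $C_{rr}\succeq 0$ is symmetric, take an orthonormal eigenbasis $V\in\mathbb{R}^{n\times q}$ of $\mathcal{S}$ and $V_\perp\in\mathbb{R}^{n\times(n-q)}$ of $\mathcal{S}^\perp=\ker C_{rr}$, and set $Q=[V\ V_\perp]$. Then $C_{rr}=V\Sigma V^\top$ with $\Sigma\succ 0$ diagonal. By Lemma~\ref{lem:covariance_identity}, $J_T\mathcal{S}\subseteq\mathcal{S}$, so
\begin{equation*}
Q^\top J_T Q=\begin{bmatrix} A_{11} & A_{12}\\ 0 & A_{22}\end{bmatrix},\qquad A_{11}=V^\top J_T V,\quad A_{22}=V_\perp^\top J_T V_\perp .
\end{equation*}
The characteristic polynomial factors as $\det(sI-A_{11})\det(sI-A_{22})$. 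Hence the spectrum of $J_T$, with algebraic multiplicities, is the union of the spectra of $A_{11}$ ($q$ eigenvalues) and $A_{22}$ ($n-q$ eigenvalues).

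\textbf{Step 2 (dominant block).} Multiply \eqref{eq:JT_Sylvester} on the left by $V^\top$ and on the right by $V$. Using $C_{rr}V=V\Sigma$ and $V^\top C_{rr}=\Sigma V^\top$, this gives $A_{11}\Sigma+\Sigma A_{11}^\top=0$. Setting $K:=\Sigma^{-1/2}A_{11}\Sigma^{1/2}$, the identity becomes $K+K^\top=0$, so $K$ is skew-symmetric. Since $A_{11}$ is similar to $K$, all $q$ of its eigenvalues are purely imaginary, which is \eqref{eq:q-dom_Evals_JT}.

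\textbf{Step 3 (non-dominant block).} The key observation is that $W_{\rm out}=C_{ur}C_{rr}^\dagger$ annihilates $\ker C_{rr}^\dagger=\ker C_{rr}=\mathcal{S}^\perp$. Hence $W_{\rm out}V_\perp=0$ and $A_{22}=V_\perp^\top J_0 V_\perp$. For any unit eigenvector $z\in\mathbb{C}^{n-q}$ of $A_{22}$ with eigenvalue $\mu$, put $x=V_\perp z$, which satisfies $\|x\|=1$. Then
\begin{equation*}
\operatorname{Re}\mu=\operatorname{Re}(z^*A_{22}z)=x^*\tfrac{J_0+J_0^\top}{2}x .
\end{equation*}
Splitting $x$ into real and imaginary parts, both of which lie in $\mathcal{S}^\perp$, and applying Assumption~\ref{ass:symmetrized_J0} to each part yields $\operatorname{Re}\mu<-\lambda$.

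\textbf{Step 4 (dominance).} By Steps 2 and 3, $J_T+\lambda I$ has exactly $q$ eigenvalues with real part $\lambda>0$ and $n-q$ eigenvalues with negative real part, with no eigenvalue on the imaginary axis. The characterization after Definition~\ref{def:p_dom} (\cite[Prop.~1]{forni2019differential}) then gives strict $q$-dominance with rate $\lambda$, and identifies the $q$ eigenvalues of $A_{11}$ as the dominant ones.

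I expect the main obstacle to be Step 3. The subtle point is that training only modifies $J_0$ through a term that vanishes on $\ker C_{rr}$. This requires the identification $\ker C_{rr}^\dagger=\ker C_{rr}=(\operatorname{Im}C_{rr})^\perp$, which uses symmetry of $C_{rr}$, and it requires the quotient block to be computed in an orthonormal basis, so that the compression of the symmetric part of $J_0$ is exactly what Assumption~\ref{ass:symmetrized_J0} controls. A non-orthogonal complement would not work. A secondary point is the degenerate case $q=0$ or $q=n$, where one block is empty; the argument goes through trivially there.
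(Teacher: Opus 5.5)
Your proposal is correct and follows the paper's proof essentially step for step. Like the paper, you block-triangularize $J_T$ in an orthonormal basis adapted to $\operatorname{Im}(C_{rr})\oplus\ker(C_{rr})$, show that the restricted block is similar to a skew-symmetric matrix via the compressed Sylvester identity, and use $W_{\mathrm{out}}V_{\perp}=0$ together with Assumption~\ref{ass:symmetrized_J0} to bound the complementary block. Your additions only make explicit what the paper leaves implicit: the diagonal eigenbasis $\Sigma$, the real/imaginary splitting for complex eigenvectors, the case $q=n$, and the appeal to \cite[Prop.~1]{forni2019differential} for strict $q$-dominance.
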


\begin{proof}
If $q=0$, then $C_{rr}=0$, so $W_{\mathrm{out}}=0$ and $J_T=J_0$.
By Assumption \ref{ass:symmetrized_J0}, all eigenvalues of $J_T$ satisfy
$\operatorname{Re}(\lambda_j[J_T])<-\lambda$, so \eqref{eq:q-dom_Evals_JT} immediately follows.

Next, we consider $q>0$.
Let
\begin{equation*}
\mathcal{S}=\operatorname{Im}(C_{rr}),
\qquad
\mathcal{S}^{\perp}=\ker(C_{rr}).
\end{equation*}
Since $C_{rr}$ is symmetric, these subspaces are orthogonal complements.
Let $V_{\mathcal{S}}\in\mathbb{R}^{n\times q}$ and
$V_{\mathcal{S}^{\perp}}\in\mathbb{R}^{n\times(n-q)}$ have orthonormal columns spanning
$\mathcal{S}$ and $\mathcal{S}^{\perp}$, respectively.
By Lemma \ref{lem:covariance_identity},
$\mathcal{S}$ is invariant under $J_T$. 
Thus, by \cite[Theorem~9.3.1]{nicholson2023linear}, in the basis
$[V_{\mathcal{S}}\;V_{\mathcal{S}^{\perp}}]$, the matrix $J_T$ has the block upper triangular form
\begin{equation*}
\tilde J_T
=
\begin{pmatrix}
J_1 & J_2\\
0 & J_4
\end{pmatrix},
\end{equation*}
where $J_1=V_{\mathcal{S}}^{\top}J_TV_{\mathcal{S}} \in\mathbb{R}^{q\times q}$ represents the restriction
$J_T|_{\mathcal{S}}$, 
$J_2 = V_{\mathcal{S}}^{\top}J_TV_{\mathcal{S}^{\perp}} \in\mathbb{R}^{q\times(n-q)}$, 
and
$J_4 = V_{\mathcal{S}^{\perp}}^{\top}J_TV_{\mathcal{S}^{\perp}} \in\mathbb{R}^{(n-q)\times(n-q)}$.
In the same basis,
\begin{equation}
\tilde C_{rr}
=
\begin{pmatrix}
C & 0\\
0 & 0
\end{pmatrix},
\qquad
C=V_{\mathcal{S}}^{\top}C_{rr}V_{\mathcal{S}}.
\end{equation}
Since $C_{rr}$ is symmetric and positive semidefinite, and $V_{\mathcal{S}}$ spans $\operatorname{Im}(C_{rr})$,
$C$ is symmetric and positive definite.

Since $\tilde J_T$ is block upper triangular, by \cite[Theorem~9.3.2]{nicholson2023linear}, the spectrum of $J_T$ is the union of the spectra of the two diagonal blocks,
\begin{equation}
\lambda[J_T]=\lambda[J_1]\cup\lambda[J_4].
\label{eq:JT_Evals}
\end{equation}
We now show that (1) $J_1$ has $q$ purely imaginary eigenvalues,
and (2) all the remaining $n-q$ eigenvalues of $J_4$ are in the left-half-plane.

First, let 
\begin{equation*}
\tilde J_1 = C^{-1/2} J_1 C^{1/2}.
\end{equation*}
Following from Lemma \ref{lem:covariance_identity},
we have $J_1 C + C J_1^{\top}=0$.
Since $C$ is symmetric and positive definite, multiplying by
$C^{-1/2}$ on both sides gives
$\tilde J_1+\tilde J_1^{\top}=0$.
Thus, $\tilde J_1$ is skew-symmetric, and since it is similar to
$J_1$, all $q$ eigenvalues of $J_1$ are purely imaginary, so
\begin{equation}
\operatorname{Re}(\lambda_j[J_1])=0
, \qquad \forall j.
\label{eq:J1_Evals}
\end{equation}

We now consider $J_4$. 
Since $C_{rr}$ is real and symmetric, and using the standard Moore-Penrose identity, 
we have
$\ker(C_{rr}^{\dagger})
=
\ker(C_{rr}^{*})
=
\ker(C_{rr})$.
Since 
the columns of $V_{\mathcal{S}^{\perp}}$ span
$\ker(C_{rr})$, it follows that
$W_{\mathrm{out}}V_{\mathcal{S}^{\perp}}
=
C_{ur}C_{rr}^{\dagger}V_{\mathcal{S}^{\perp}}
=
0$.
Therefore,
\begin{equation*}
J_4
=
V_{\mathcal{S}^{\perp}}^{\top}J_TV_{\mathcal{S}^{\perp}}
=
V_{\mathcal{S}^{\perp}}^{\top}J_0V_{\mathcal{S}^{\perp}}.
\end{equation*}
Since the columns of $V_{\mathcal{S}^{\perp}}$ form an orthonormal basis of
$\mathcal{S}^{\perp}$, 
Assumption \ref{ass:symmetrized_J0} gives 
\begin{equation*}
\frac{J_4+J_4^{\top}}{2}
=
V_{\mathcal{S}^{\perp}}^{\top}
\frac{J_0+J_0^{\top}}{2}
V_{\mathcal{S}^{\perp}}
\prec
-\lambda I.
\end{equation*}
Let $\mu$ be an eigenvalue of $J_4$ with eigenvector $z\neq 0$. Since $J_4$ is real,
\begin{equation*}
\operatorname{Re}(\mu)
=
\frac{
z^{*}
\left(
\frac{J_4+J_4^{\top}}{2}
\right)
z
}{
z^{*}z
}
<
-\lambda.
\end{equation*}
Thus, all $n-q$ eigenvalues of $J_4$ satisfy
\begin{equation}
\operatorname{Re}(\lambda_j[J_4])<-\lambda
, \qquad \forall j.
\label{eq:J4_Evals}
\end{equation}

Finally, using \eqref{eq:JT_Evals} with \eqref{eq:J1_Evals}, \eqref{eq:J4_Evals}
we get that $J_T$ has exactly $q$ eigenvalues, counting algebraic multiplicity, satisfying
$\operatorname{Re}(\lambda[J_T]) = 0$, and $n-q$ eigenvalues satisfing
$\operatorname{Re}(\lambda_j[J_4])<-\lambda$. Hence, \eqref{eq:q-dom_Evals_JT} follows.

\end{proof}

 
For the exact infinite-time result, we require Assumption \ref{ass:symmetrized_J0} on
$\operatorname{Im}(C_{rr})^\perp$. In standard RC, training is performed over a finite time interval, so this result is only approximated. In practice, $J_0$ is usually chosen Hurwitz so that the untrained reservoir is stable and has the fading memory property, which is typically sufficient for practical RC operation, but does not in general imply Assumption \ref{ass:symmetrized_J0}. For the choice $J_0=-I+\omega I$ with $\omega<1$, Assumption \ref{ass:symmetrized_J0} is automatically satisfied for any $0<\lambda<1-\omega$.

Moreover, under Assumption~\ref{ass:symmetrized_J0}, when $J_0$ is Hurwitz, we can choose $\lambda>0$ sufficiently small such that
\begin{equation}
\max_{\mu\in\sigma(J_0)}
\operatorname{Re}(\mu)
<
-\lambda.
\end{equation}
Thus, the untrained reservoir has no dominant eigenvalues with respect to $\lambda$, while $J_T$ has exactly $q$ such eigenvalues. Hence, the $q$ dominant eigenvalues are directly generated by training, and depend on the input data.


Several previous works have considered correlations between recurrent connectivity and low-rank feedback \cite{schuessler2020dynamics}, in particular in the training of reservoirs with periodic input time series \cite{susman2021quality,HakimKarma2026}. 
Most recently, 
\cite{HakimKarma2026} 
studies the emergence of ``slow'', here called dominant, modes through reservoir training.
For linear reservoirs trained on periodic time series with a finite
number of Fourier modes, this work shows that training generates a corresponding finite set of modes associated with the frequencies present in the data.
Our Theorem \ref{thm:lin_res_dom_subspace} is a complementary and more general characterization of this phenomenon for linear reservoirs, as it  only assumes boundedness of signals and the existence of
uncentered second-moment matrices in \eqref{eq:Cur_Crr_limit}, rather than explicit periodicity in the cited works. 
Theorem \ref{thm:lin_res_dom_subspace} also does not rely on random matrix constructions for the reservoir connectivity as \cite{schuessler2020dynamics,susman2021quality,HakimKarma2026}, instead making only a general assumption of contractivity on the unexcited reservoir directions. Finally, our framework allows arbitrarily multi-dimensional inputs while \cite{susman2021quality,HakimKarma2026} assume a linear feedback of rank $1$.


\section{Dominant Diagonal Reservoirs}
\label{sec:diag_lin_RC}

We now consider the linear reservoir as an even more simplified model, where the effect of training on the reservoir dynamics can be analyzed explicitly. 
We take in \eqref{eq:RC_linear_untrained} and \eqref{eq:RC_linear_trained}
$W = \omega I$, 
where $\omega \in \mathbb{R}$ satisfies $\omega<1$.
Here, by design, $J_0 = (-1 + \omega)I$ is Hurwitz, and all eigenvalues of $J_0$ coincide. By constructing an explicit spectral solution to the trained diagonal reservoir, we will connect its dominant modes established by Theorem \ref{thm:lin_res_dom_subspace} to DMD modes of the filtered inputs during training.
%

\subsection{Spectrum Analysis of Diagonal Trained Reservoir}
\label{subsec:spectrum_analysis}

Here we show that the trained diagonal linear reservoir \eqref{eq:RC_linear_trained} possesses dominant modes that are directly characterized by properties of
the training data $U$.
To do this, we derive expressions for the eigenvalues and eigenvectors of $J_T$ and leverage explicit solutions of LTI systems. 
First, we will prove a useful lemma relating the product of the trained output matrix $W_{\text{out}}$ with the input matrix $W_{\text{in}}$ to the output time series $U$. This quantity will be useful for describing the trajectories of trained reservoirs. To state the lemma, we first introduce some notation. Observe that from the variation of constants formula with $r(0) = 0$, each data snapshot of the open-loop reservoir state $r(t)$ of \eqref{eq:RC_linear_untrained} at time $t_i$ during training can be expressed explicitly as 
\begin{equation}
    \small r(t_i) = W_{\mathrm{in}}\int_0^{t_i} e^{(\omega-1)(t_i-\tau)}u(\tau)\,d\tau + \frac{\sigma_b\left( e^{(\omega - 1) t_i} - 1 \right)}{\omega - 1}  \mathbf{1}_n. \label{eq:reservoir_VoC}
\end{equation}
\noindent 
Accordingly, we write $r(t_i) = W_{\mathrm{in}} b_1(t_i)+ b_2(t_i) \mathbf{1}_n$ with $b_{1}(t_i) := \int_0^{t_i} e^{(\omega-1)(t_i-\tau)}u(\tau)\,d\tau\in \mathbb{R}^d$ and $b_2(t_i) := \sigma_b\big(e^{(\omega-1)t_i}-1\big)/(\omega-1)\in \mathbb{R}$, and define the matrices  
\begin{equation} \small
B_1 := [\,b_1(t_1)\ \cdots\ b_1(t_m)\,] ,\ 
B_2 := [\,b_2(t_1)\ \cdots\ b_2(t_m)\,]. \label{eq:B_matrices}
\end{equation}
Next, we make a standing data informativity assumption.
\begin{assumption}\label{ass:data_inform} Consider \eqref{eq:RC_linear_untrained} over a training interval $t \in [0,T]$ with $\omega < 1$.  
We make the following assumptions: 

1) $u \in L^2([0,T]; \mathbb{R}^d)$;

2) If $\sigma_b = 0$, $\operatorname{rank}(B_1) = d$; 

3) 
If $\sigma_b \neq 0$, $\operatorname{rank} \begin{bmatrix} B_1^{\top} & B_2^{\top} \end{bmatrix}^{\top} = d+1$. 

\end{assumption}

\begin{lemma} \label{lem:WoutWin}
Consider 
\eqref{eq:RC_linear_untrained} under Assumption \ref{ass:data_inform}, and 
\eqref{eq:RC_linear_trained}, with $\mathrm{W_{out}}$ defined through \eqref{eq:W_out_discrete} with $r(0) = 0$. Assume $\omega < 1$,
$\mathrm{W_{in}}$ is full column rank $d$, $\mathbf{1}_{n}\not \in \operatorname{Im} (\mathrm{W_{in}})$.
Then for $\sigma_b \neq 0$,
\begin{equation}
     \mathrm{W_{out}} \mathrm{W_{in}} = U \Big(I_{m} - s^{-1} \big(I_m - B_1^{\dagger} B_1\big) B_2^T B_2 \Big) B_1^{\dagger} \label{eq:WoutWin_lem}
\end{equation}
where $B_1, B_2$ are as in \eqref{eq:B_matrices} and $s$ is defined as
\begin{equation}
    s = B_2 \big(I - B_1^{\dagger} B_1\big) B_2^T \in \mathbb{R}.
\end{equation}
If $\sigma_b = 0$ then instead, 
\begin{equation}
    \mathrm{W_{out}} \mathrm{W_{in}} = U B_1^{\dagger}. \label{eq:WoutWin_lem_nobias}
\end{equation}
\end{lemma}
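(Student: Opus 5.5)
The plan is to factor the snapshot matrix $R$ as a product of a full-column-rank matrix and a full-row-rank matrix. For such a product the pseudoinverse splits, and $W_{\mathrm{in}}$ will then cancel against its own pseudoinverse. By \eqref{eq:reservoir_VoC} with $r(0)=0$, every column satisfies $r(t_i) = W_{\mathrm{in}} b_1(t_i) + b_2(t_i)\mathbf{1}_n$. Hence
\[
R = M B, \qquad M := \begin{bmatrix} W_{\mathrm{in}} & \mathbf{1}_n \end{bmatrix}\in\mathbb{R}^{n\times(d+1)},\qquad B := \begin{bmatrix} B_1 \\ B_2\end{bmatrix}\in\mathbb{R}^{(d+1)\times m}.
\]

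\textbf{Case $\sigma_b\neq 0$.} Because $W_{\mathrm{in}}$ has full column rank and $\mathbf{1}_n\notin\operatorname{Im}(W_{\mathrm{in}})$, the matrix $M$ has full column rank $d+1$. By Assumption~\ref{ass:data_inform}(3), $B$ has full row rank $d+1$. I would then use the standard full-rank-factorization identity $(MB)^\dagger = B^\dagger M^\dagger$. It follows from checking the four Penrose conditions with $M^\dagger = (M^\top M)^{-1}M^\top$ and $B^\dagger = B^\top (BB^\top)^{-1}$. This gives $W_{\mathrm{out}} = U B^\dagger M^\dagger$ from \eqref{eq:W_out_discrete}. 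Since $M^\dagger M = I_{d+1}$, we get $M^\dagger W_{\mathrm{in}} = \begin{bmatrix} I_d \\ 0\end{bmatrix}$. Therefore $W_{\mathrm{out}}W_{\mathrm{in}}$ equals $U$ times the first $d$ columns of $B^\dagger$.

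It remains to compute the pseudoinverse of the row-augmented matrix $B$. For this I would use the Greville/Cline formula for appending a row $b$ to a matrix $A$:
\[
\begin{bmatrix} A\\ b\end{bmatrix}^\dagger = \begin{bmatrix} A^\dagger - k\, b A^\dagger & k\end{bmatrix},\qquad k = \frac{(I-A^\dagger A)b^\top}{s},\quad s = b(I-A^\dagger A)b^\top,
\]
valid whenever $b(I-A^\dagger A)\neq 0$. We apply it with $A=B_1$ and $b=B_2$. The first block is $\big(I_m - s^{-1}(I_m-B_1^\dagger B_1)B_2^\top B_2\big)B_1^\dagger$, which yields \eqref{eq:WoutWin_lem}.

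\textbf{Nondegeneracy of $s$.} We must confirm $s\neq 0$. The matrix $I-B_1^\dagger B_1$ is the orthogonal projector onto $\ker B_1$, so $s=\|(I-B_1^\dagger B_1)B_2^\top\|^2$. This vanishes exactly when $B_2^\top\in\operatorname{Im}(B_1^\top)$, i.e. when $B_2$ lies in the row space of $B_1$. That would contradict $\operatorname{rank} B = d+1$ together with $\operatorname{rank}B_1\le d$.

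\textbf{Case $\sigma_b=0$.} Here $R = W_{\mathrm{in}}B_1$, with $W_{\mathrm{in}}$ of full column rank $d$ and $B_1$ of full row rank $d$ by Assumption~\ref{ass:data_inform}(2). The same factorization argument gives $R^\dagger = B_1^\dagger W_{\mathrm{in}}^\dagger$ and $W_{\mathrm{in}}^\dagger W_{\mathrm{in}}=I_d$, hence \eqref{eq:WoutWin_lem_nobias}.

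\textbf{Main obstacle.} The delicate step is justifying the row-augmented pseudoinverse formula, and in particular matching the ordering of factors to \eqref{eq:WoutWin_lem}. If citing Greville is undesirable, I would instead verify directly the four Penrose conditions for the candidate $X=\begin{bmatrix} B_1^\dagger - kB_2B_1^\dagger & k\end{bmatrix}$. The verification uses $B_2 k = 1$ and $B_1 k = 0$, both of which hold because $(I-B_1^\dagger B_1)$ annihilates the row space of $B_1$. Given these, $BX$ is the identity on $\mathbb{R}^{d+1}$, which is symmetric. Also $XB = B_1^\dagger B_1 + k\,B_2(I-B_1^\dagger B_1)$, and this is symmetric because it is the sum of the projector onto $\operatorname{row}(B_1)$ and the rank-one projector onto $(I-B_1^\dagger B_1)B_2^\top$.
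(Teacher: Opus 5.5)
Your proposal is correct and follows the same route as the paper: you factor $R=[W_{\rm in}\ \mathbf{1}_n]\tilde B$ and reduce $W_{\rm out}W_{\rm in}$ to $U$ times the first $d$ columns of $\tilde B^\dagger$. The only differences are in tools. The paper justifies this reduction by reparametrizing the least-squares problem in $M=W_{\rm out}Z$, which shows the identity holds for every least-squares solution and not just $UR^\dagger$, and it computes $\tilde B^\dagger=\tilde B^\top(\tilde B\tilde B^\top)^{-1}$ via the Schur complement of $B_1B_1^\top$. You instead use the reverse-order law for full-rank factorizations and Greville's row-appending formula, and you make the nonvanishing of $s$ explicit where the paper leaves it implicit.
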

\begin{proof} Observe that the reservoir snapshot matrix \eqref{eq:reservoir_data} is
    \begin{equation}
R=W_{\mathrm{in}}B_1+\mathbf{1}_n B_2
=
\bigl[\,W_{\mathrm{in}}\ \mathbf{1}_n\,\bigr]
\begin{bmatrix}
B_1\\
B_2
\end{bmatrix}
:=
Z\tilde B.
\end{equation}
Since $W_{\mathrm{in}}$ has full column rank and $\mathbf{1}_n$ is linearly independent of its columns, $Z$ has full column rank. 
Thus, every matrix
$W_{\rm out}Z \in\mathbb R^{d\times(d+1)}$ can be realized by some
$W_{\rm out}$, and the least-squares training step \eqref{eq:min_W_out_discrete} can be rewritten as 
\begin{equation*}
\min_{M \in\mathbb R^{d\times(d+1)}}
\|M \tilde{B}-U\|^2 ,\qquad M=W_{\rm out}Z .
\end{equation*}
Since $\tilde B$ has full row rank by Assumption~\ref{ass:data_inform}, this problem has the unique minimizer $M=U\tilde B^{\dagger}=U\tilde B^{\top}(\tilde B\tilde B^{\top})^{-1}$. Hence, every least-squares solution $W_{\rm out}$ of \eqref{eq:min_W_out_discrete} satisfies $W_{\rm out}Z=U\tilde B^{\dagger}$. 
$W_{\rm in}
=
Z
\begin{bmatrix}
I_d\\
0
\end{bmatrix}$,
we have
\begin{equation}
W_{\rm out}W_{\rm in}
=
W_{\rm out} Z
\begin{bmatrix}
I_d\\
0
\end{bmatrix}
=
U \tilde{B}^\dagger
\begin{bmatrix}
I_d\\
0
\end{bmatrix}. \label{eq:WoutWin_intermediate_2}
\end{equation}
%
%
%
Then the final expression \eqref{eq:WoutWin_lem} follows from \eqref{eq:WoutWin_intermediate_2} by explicit computation of $\tilde B^{\dagger} = \tilde B^T (\tilde B \tilde B^T)^{-1}$ using the Schur complement formula for the inverse of $\tilde B \tilde B^T = \begin{bmatrix} B_1 B_1^T & B_1 B_2^T \\ B_2 B_1^T & B_2 B_2^T \end{bmatrix}$, leveraging the fact that $B_1$ is full row rank and therefore $B_1 B_1^T$ is invertible. 
For $\sigma_b = 0$, $R = W_{\mathrm{in}}B_1$ and the same argument with $Z=W_{\mathrm{in}}$, $\tilde B=B_1$ gives \eqref{eq:WoutWin_lem_nobias}. 
\end{proof}

\noindent With this Lemma established, we state and prove the main result of this section.

\begin{theorem}[Trained Reservoir Trajectories]
\label{thm:trained_reservoir_trajectories}
Consider
\eqref{eq:RC_linear_untrained} under Assumption \ref{ass:data_inform}, and 
\eqref{eq:RC_linear_trained} with $W_{out}$ obtained via the training step \eqref{eq:W_out_discrete} with $r(0) = 0$. Assume $d<n$, $\omega < 1$, 
$W_{in}$ is full column rank $d$, $\mathbf{1}_{n}\not \in \operatorname{Im} (W_{\mathrm{in}})$,  $U$ has full row rank, 
and $J_T= (-1 + \omega)I + W_{\mathrm{in}} W_{\mathrm{out}}$ is diagonalizable and invertible. The following hold:

1) The trajectories of \eqref{eq:RC_linear_trained} are given by 
\begin{equation}
    r(t) = - J_T^{-1}\sigma_b \mathbf{1}_n + \sum_{i = 1}^n c_i e^{\lambda_{i}[J_T] t} v_i, \label{eq:reservoir_traj}
\end{equation}
and $v_i$ is a right eigenvector of $J_T$ corresponding to $\lambda_i[J_T]$;

2) $J_T$ has at least $n - d$ unperturbed eigenvalues $\lambda_k[J_T] = -1 + \omega$, with $k = 1, \dots, n_u$, $n_u \geq n-d$ and at most $d$ shifted eigenvalues $\lambda_i[J_T] = -1 + \omega + \lambda_{i}\left[\mathrm{W_{out}} \mathrm{W_{in}}\right] \neq -1 + \omega$ where $i\in \{1,...,n_s\}$ with $n_s = \operatorname{rank}( \mathrm{W_{out}} \mathrm{W_{in}}) \leq d$, $n_u + n_s = n$, and $\mathrm{W_{out}} \mathrm{W_{in}}$ is defined by \eqref{eq:WoutWin_lem} for $\sigma_b \neq 0$ and by \eqref{eq:WoutWin_lem_nobias} for $\sigma_b = 0$. If $y_i$ is an eigenvector of $\mathrm{W_{out}} \mathrm{W_{in}}$ corresponding to a nonzero eigenvalue $\lambda_i[\mathrm{W_{out}} \mathrm{W_{in}}]$, then $v_i =  \mathrm{W_{in}} y_i$ is an eigenvector of $J_T$ corresponding to $\lambda_i[J_T]$.

3) Consider $\sigma_b = 0$ and let the training signal $u(t)$ be piecewise constant on each interval, i.e. $u(t) = u(t_k)$ for  $t \in (t_{k-1}, t_{k}]$ for $k = 2, \dots, m$ where $t_0 = 0$ and $t_k = k h$ for some time step $h > 0$. Define the unit right-shift matrix $S\in\mathbb{R}^{m\times m}$, such that 
$S_{jk}=\delta_{j,(k+1)}$, and $\alpha = e^{-(1-\omega) h}$. Then the shifted eigenvalues of $J_T$ are
\begin{equation}
    \lambda_i[J_T] = -1 + \omega + \frac{1 - \omega}{1 - \alpha} \lambda_i\left[U \left( U \big(I - \alpha S^\top\big)^{-1} \right)^{\dagger}\right].
    \label{eq:lin_explicit_Evals_gen}
\end{equation}
For every right eigenvector $y_i$ of  $U \left( U \big(I - \alpha S^\top\big)^{-1} \right)^{\dagger}$ corresponding to a nonzero eigenvalue, $J_T$ has an eigenvector of $\lambda_i[J_T]$ defined by $v_i = W_{in} y_i$. 
\end{theorem}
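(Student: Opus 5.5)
We prove the three parts in order; each reduces to linear algebra on $J_T=(\omega-1)I+W_{\mathrm{in}}W_{\mathrm{out}}$, using Lemma~\ref{lem:WoutWin} for the data-dependent factor.

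\emph{Part 1.} System \eqref{eq:RC_linear_trained} is affine LTI. Since $J_T$ is invertible, $r^\star=-J_T^{-1}\sigma_b\mathbf{1}_n$ is its unique equilibrium. The shifted state $r-r^\star$ satisfies $\tfrac{d}{dt}(r-r^\star)=J_T(r-r^\star)$. Because $J_T$ is diagonalizable, its right eigenvectors $v_1,\dots,v_n$ form a basis. We expand $r(0)-r^\star=\sum_i c_i v_i$ and use $e^{J_Tt}v_i=e^{\lambda_i[J_T]t}v_i$, which gives \eqref{eq:reservoir_traj}.

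\emph{Part 2.} Since $\lambda_i[J_T]=\omega-1+\lambda_i[W_{\mathrm{in}}W_{\mathrm{out}}]$, it suffices to study $W_{\mathrm{in}}W_{\mathrm{out}}$.
\begin{itemize}
\item Diagonalizability of $J_T$ implies diagonalizability of $W_{\mathrm{in}}W_{\mathrm{out}}$. Hence the algebraic multiplicity of its zero eigenvalue is $n-\operatorname{rank}(W_{\mathrm{in}}W_{\mathrm{out}})$.
\item Because $W_{\mathrm{in}}$ has full column rank, $\operatorname{rank}(W_{\mathrm{in}}W_{\mathrm{out}})=\operatorname{rank}(W_{\mathrm{out}})\le d$. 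This already yields at least $n-d$ unperturbed eigenvalues $\omega-1$.
\item By the $AB$/$BA$ fact from the preliminaries, the nonzero eigenvalues of $W_{\mathrm{in}}W_{\mathrm{out}}$ coincide, with algebraic multiplicity, with those of $W_{\mathrm{out}}W_{\mathrm{in}}$. Their number $n_s$ equals $\operatorname{rank}(W_{\mathrm{out}})$.
\item For $n_s=\operatorname{rank}(W_{\mathrm{out}}W_{\mathrm{in}})$: the number of nonzero eigenvalues of a matrix is at most its rank, so $n_s\le \operatorname{rank}(W_{\mathrm{out}}W_{\mathrm{in}})$. Conversely, $\operatorname{rank}(W_{\mathrm{out}}W_{\mathrm{in}})\le\operatorname{rank}(W_{\mathrm{out}})=n_s$. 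Thus $n_s=\operatorname{rank}(W_{\mathrm{out}}W_{\mathrm{in}})$ and $n_u+n_s=n$.
\item The explicit form of $W_{\mathrm{out}}W_{\mathrm{in}}$ is given by Lemma~\ref{lem:WoutWin}.
\item For the eigenvectors, take $W_{\mathrm{out}}W_{\mathrm{in}}y_i=\lambda y_i$ with $\lambda\ne0$. Then $W_{\mathrm{in}}W_{\mathrm{out}}(W_{\mathrm{in}}y_i)=\lambda W_{\mathrm{in}}y_i$, and $W_{\mathrm{in}}y_i\neq0$ by full column rank. Hence $v_i=W_{\mathrm{in}}y_i$ is an eigenvector of $J_T$ for $\omega-1+\lambda$.
\end{itemize}

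\emph{Part 3.} With $\sigma_b=0$, Lemma~\ref{lem:WoutWin} gives $W_{\mathrm{out}}W_{\mathrm{in}}=UB_1^\dagger$, so we only need a closed form for $B_1$. We split $b_1(t_k)$ over the intervals $(t_{j-1},t_j]$ and take $u$ constant equal to $u(t_j)$ on each interval, reading the first interval $(0,t_1]$ with value $u(t_1)$. Then
\begin{equation*}
b_1(t_k)=\sum_{j=1}^{k}u(t_j)\int_{t_{j-1}}^{t_j}e^{(\omega-1)(t_k-\tau)}d\tau=\frac{1-\alpha}{1-\omega}\sum_{j=1}^{k}\alpha^{k-j}u(t_j).
\end{equation*}
The coefficient matrix $M_{jk}=\alpha^{k-j}\mathbb{1}[j\le k]$ is upper triangular. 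Since $((S^\top)^p)_{jk}=\delta_{k,j+p}$ and $S^\top$ is nilpotent, $M=\sum_{p\ge0}\alpha^p(S^\top)^p=(I-\alpha S^\top)^{-1}$.

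Writing $c=\tfrac{1-\alpha}{1-\omega}$, this gives $B_1=c\,U(I-\alpha S^\top)^{-1}$. Hence $B_1^\dagger=c^{-1}\big(U(I-\alpha S^\top)^{-1}\big)^\dagger$, using $(cA)^\dagger=c^{-1}A^\dagger$. Therefore
\begin{equation*}
W_{\mathrm{out}}W_{\mathrm{in}}=\frac{1-\omega}{1-\alpha}\,U\big(U(I-\alpha S^\top)^{-1}\big)^\dagger .
\end{equation*}
Scaling a matrix scales its eigenvalues and leaves its eigenvectors unchanged, so Part~2 yields \eqref{eq:lin_explicit_Evals_gen} and the eigenvectors $v_i=W_{\mathrm{in}}y_i$.

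The main obstacle is the index bookkeeping in this convolution: matching the piecewise-constant convention, in particular the first interval, to the orientation of $S$ so that one obtains $(I-\alpha S^\top)^{-1}$ rather than its transpose. The rank condition $\operatorname{rank}(B_1)=d$ is then automatic from $U$ having full row rank.
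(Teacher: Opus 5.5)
Your proposal is correct and follows the paper's proof essentially step for step. You use the affine LTI eigenvector expansion for part 1, the $AB$/$BA$ spectral correspondence with $v_i=W_{\mathrm{in}}y_i$ for part 2, and the interval-by-interval evaluation of $b_1(t_k)$ for part 3, giving $B_1=\frac{1-\alpha}{1-\omega}U(I-\alpha S^\top)^{-1}$ (the paper's $UK^\top$); your use of diagonalizability to establish $n_s=\operatorname{rank}(W_{\mathrm{out}}W_{\mathrm{in}})$ and your explicit treatment of the first interval $(0,t_1]$ are somewhat more careful than the paper, but the route is the same.
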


\begin{proof}
1) is an application of standard solutions to linear time invariant systems \cite{chen1984linear}. To prove 2), we observe that $\lambda_i[J_T] = -1 + \omega + \lambda_i[ W_{\text{in}} W_{\text{out}}]$, and the corresponding eigenvectors $v_i$ coincide with those of $W_{\text{in}}W_{\text{out}}$. Dimensionality of the null space and the perturbed eigenspace follow from the observation that $W_{\text{in}} \in \mathbb{R}^{n \times d}$ and $W_{\text{out}}\in \mathbb{R}^{d \times n}$ with $d < n$, therefore the low-rank perturbation  $W_{\text{in}} W_{\text{out}}$ has at most $d$ nonzero eigenvalues.
Furthermore, recall that for nonzero eigenvalues of $W_{\mathrm{in}}W_{\mathrm{out}}$, $\lambda_i[W_{\text{in}}W_{\text{out}}] = \lambda_i[W_{\text{out}}W_{\text{in}}]$ and $v_i = W_{\text{in}} y_i$, where $y_i \in \mathbb{R}^d$ is a right eigenvector of $W_{\text{out}}W_{\text{in}}$ corresponding to $\lambda_i[W_{\text{out}}W_{\text{in}}]$. Then to characterize the shifted eigenvalues $\lambda_i[J_T]$ and the eigenvectors $v_i$ we must simply characterize $\lambda_i[W_{\text{out}}W_{\text{in}}]$ and $y_i$. The theorem statement follows from the expression for $W_{\rm out}W_{\rm in}$ derived in Lemma \ref{lem:WoutWin}.

For part 3), we do this explicitly for the case $\sigma_b = 0$ and piecewise-constant inputs.
Suppose that $u(t)=u(t_{k})$ for $t\in(t_{k-1},t_k]$, then, 
\begin{equation} \label{eq:b1}
\begin{aligned}
b_1(t_j)
& =
\sum_{k=1}^j \int_{t_{k-1}}^{t_k} e^{(\omega-1)(t_j-\tau)}u(\tau)\,d\tau
\\
& =
\sum_{k=1}^j u_k \int_{t_{k-1}}^{t_k} e^{(\omega-1)(t_j-\tau)}\,d\tau
= \frac{1-\alpha}{1-\omega} \sum_{k=1}^j \alpha^{j-k}u(t_k).
\end{aligned}
\end{equation}

\noindent 
Then $B_1 = U K^\top$ where $K \in \mathbb{R}^{m \times m}$ is the lower triangular matrix with entries $K_{jk} = \frac{1-\alpha}{1-\omega}\alpha^{j-k}$, $k \le j$, and $K_{jk}=0$ otherwise, alternatively written as
\begin{equation}
K = \frac{1-\alpha}{1-\omega}\sum_{q=0}^{m-1}\alpha^q S^q
=
\frac{1-\alpha}{1-\omega}(I-\alpha S)^{-1},
\end{equation}
since $S^q=0$ for all $q\ge m$, and $K$ is invertible. 
Thus, from Lemma \ref{lem:WoutWin},
\begin{equation}
W_{\mathrm{out}}W_{\mathrm{in}}
= \frac{1-\omega}{1-\alpha} \, U \left( U \big(I - \alpha S^\top\big)^{-1} \right)^\dagger,
\end{equation}
and \eqref{eq:lin_explicit_Evals_gen} directly follows.
\end{proof}

\subsection{Connection to Dynamic Mode Decomposition}

For $\sigma_b = 0$, we showed in Lemma \ref{lem:WoutWin} that the effective feedback matrix obtained through training is $W_{\rm out}W_{ \rm in} = U B_1^{\dagger}$ where entries of $B_1$ are filtered inputs $b_1(t_j)$ defined in \eqref{eq:b1}.  We can relate this characterization to the classical Dynamic Mode Decomposition (DMD) algorithm, which fits a linear evolution map to observed data and characterizes its eigenvalues and modes \cite{tu2014DMD}. A potential connection between DMD and reservoir computing was previously suggested in \cite{bollt2021explaining}; we make the connection explicit for diagonal reservoirs trained with piecewise constant inputs as in Theorem \ref{thm:trained_reservoir_trajectories}.3. To do this, we apply the DMD construction to the filtered input sequence $b_1(t_j)$, fitting a map that predicts $b_1(t_{j-1})$ from $b_1(t_j)$. This backwards-in-time regression is called the \textit{backwards} DMD step, and has appeared in forward-backwards DMD methods in the literature, typically to reduce noise effects on the operator estimation \cite{dawson2016characterizing,lortie2026forward}. In the following Corollary we connect the spectrum of a trained diagonal reservoir to that of the operator learned via backwards DMD.

\begin{corollary} [Backward DMD representation]
\label{cor:DMD}    Consider \eqref{eq:RC_linear_untrained}, \eqref{eq:RC_linear_trained} under the assumptions of Theorem \ref{thm:trained_reservoir_trajectories}, with $\sigma_b=0$ and $J_T = (-1+\omega)I+W_{\rm in} W_{\rm out}$. Define 
    \begin{equation*}
        B_{-} = \begin{bmatrix}
            b_1(t_0) & \dots & b_1(t_{m-1})
        \end{bmatrix}, \quad b_1(t_0) = 0.
    \end{equation*}
    The backward DMD matrix $A_b = B_{-} B_1^{\dagger}$ is the unique minimizer of $\| A B_1 - B_{-}\|$ over $A \in \mathbb{R}^{d\times d}$. It satisfies 
    \begin{equation} \label{eq:JTWin}
        J_T W_{\rm in} = \frac{(1-\omega) \alpha}{1 - \alpha} W_{\rm in} (I - A_b).
    \end{equation}
    Then each of the $d$ right eigenpairs $A_b y_i = \mu_i y_i$ induces a right eigenpair $\lambda_i[J_T]$, $v_i$ for the trained reservoir as
    \begin{equation}
        \lambda_i(J_T) = \frac{(1 - \omega) \alpha}{1 - \alpha} (1 - \mu_i), \quad v_i = W_{\rm in} y_i. \label{eq:eigenpairs}
    \end{equation}
    The remaining $n-d$ eigenvalues of $J_T$ equal $-1 + \omega$.
\end{corollary}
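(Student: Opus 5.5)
The plan is to derive a one-step recursion for the filtered inputs under the piecewise-constant hypothesis of Theorem~\ref{thm:trained_reservoir_trajectories}.3. We then use it to rewrite $U$ in terms of $B_1$ and $B_{-}$, and substitute into the expression $W_{\rm out}W_{\rm in}=UB_1^{\dagger}$ from Lemma~\ref{lem:WoutWin} (case $\sigma_b=0$).

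\textbf{Step 1: recursion for $b_1$.} Write $c:=(1-\alpha)/(1-\omega)$. From \eqref{eq:b1},
\begin{equation*}
b_1(t_j)=\alpha\, b_1(t_{j-1})+c\,u(t_j), \qquad j=1,\dots,m,
\end{equation*}
using $b_1(t_0)=0$. In matrix form this reads $B_1=\alpha B_{-}+cU$, i.e.\ $U=c^{-1}(B_1-\alpha B_{-})$.

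\textbf{Step 2: uniqueness of $A_b$.} By Assumption~\ref{ass:data_inform}, $B_1\in\mathbb{R}^{d\times m}$ has full row rank $d$. Hence $B_1B_1^\top$ is invertible, and the normal equations $AB_1B_1^\top=B_{-}B_1^\top$ have the unique solution $A=B_{-}B_1^\top(B_1B_1^\top)^{-1}=B_{-}B_1^{\dagger}$. The same fact gives $B_1B_1^{\dagger}=I_d$.

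\textbf{Step 3: the intertwining identity.} Combining Steps 1 and 2,
\begin{equation*}
W_{\rm out}W_{\rm in}=UB_1^{\dagger}=c^{-1}\big(B_1B_1^{\dagger}-\alpha B_{-}B_1^{\dagger}\big)=c^{-1}(I-\alpha A_b).
\end{equation*}
Then $J_TW_{\rm in}=(\omega-1)W_{\rm in}+W_{\rm in}(W_{\rm out}W_{\rm in})$, and a short algebraic simplification gives
\begin{equation*}
(\omega-1)I+\tfrac{1-\omega}{1-\alpha}(I-\alpha A_b)=\tfrac{(1-\omega)\alpha}{1-\alpha}(I-A_b).
\end{equation*}
This is exactly \eqref{eq:JTWin}.

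\textbf{Step 4: eigenpairs.} Given $A_by_i=\mu_iy_i$ with $y_i\neq 0$, apply \eqref{eq:JTWin} to $y_i$. This gives $J_T(W_{\rm in}y_i)=\frac{(1-\omega)\alpha}{1-\alpha}(1-\mu_i)\,W_{\rm in}y_i$. Since $W_{\rm in}$ has full column rank, $W_{\rm in}y_i\neq 0$, which proves \eqref{eq:eigenpairs}.

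\textbf{Step 5: the remaining $n-d$ eigenvalues.} I expect this counting to be the main subtle point, because $A_b$ need not be diagonalizable and some $\mu_i$ may coincide. To avoid relying on eigenvector counts, I would use a block-triangular argument as in the proof of Theorem~\ref{thm:lin_res_dom_subspace}.

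By \eqref{eq:JTWin}, $\operatorname{Im}W_{\rm in}$ (dimension $d$) is $J_T$-invariant. Complete the basis $W_{\rm in}$ with a matrix $N$ spanning a complement. Then $J_T$ becomes block upper triangular, with top-left block $\frac{(1-\omega)\alpha}{1-\alpha}(I-A_b)$. For the bottom-right block, note that $W_{\rm in}W_{\rm out}N\in\operatorname{Im}W_{\rm in}$, so $J_TN\equiv(\omega-1)N$ modulo $\operatorname{Im}W_{\rm in}$. Hence the bottom-right block is $(\omega-1)I_{n-d}$.

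The spectrum of $J_T$ is therefore the $d$ values $\frac{(1-\omega)\alpha}{1-\alpha}(1-\mu_i)$, counted with algebraic multiplicity, together with $-1+\omega$ with multiplicity $n-d$. This completes the statement.
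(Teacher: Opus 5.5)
Your proposal is correct and follows the paper's proof essentially step for step: rewrite \eqref{eq:b1} as $U=\frac{1-\omega}{1-\alpha}(B_1-\alpha B_{-})$, substitute into $W_{\rm out}W_{\rm in}=UB_1^{\dagger}$ using $B_1B_1^{\dagger}=I_d$ to get $\frac{1-\omega}{1-\alpha}(I-\alpha A_b)$, and then act on $y_i$. Your Steps 2 and 5 make explicit two points the paper's proof leaves implicit: the uniqueness of $A_b$ via the normal equations, and the count of the remaining $n-d$ eigenvalues, which your block-triangular argument establishes with algebraic multiplicity and without appealing to diagonalizability of $J_T$.
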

\begin{proof}
    Observe that we can equivalently write \eqref{eq:b1} as 
    \begin{equation}
        U = \frac{1 - \omega}{1 - \alpha} (B_1 - \alpha B_-). 
    \end{equation}
    Then from Lemma \ref{lem:WoutWin} it follows that 
    \begin{equation}
        W_{\rm out} W_{\rm in} = U B_1^{\dagger} = \frac{1 - \omega}{1 - \alpha}  (I_d - \alpha A_b)
    \end{equation}
    where we used $B_1 B_1^{\dagger} = I_d$ and the definition of $A_b$. Substituting this expression into $J_T W_{\rm in}$ using the definition  
    of $J_T$ 
    we arrive at \eqref{eq:JTWin}, from which the eigenpair relationship \eqref{eq:eigenpairs} follows directly by acting on $y_i$ with $J_T W_{\rm in}$. 
\end{proof}
Corollary \ref{cor:DMD} describes how the dominant modes of a trained reservoir are generated by the eigenvalues and 
eigenmodes 
of the backwards DMD operator fitted to the filtered input history. This connection establishes a concrete link between reservoir training and Koopman theory \cite{otto2021koopman}, which describes how observables, i.e. functions of the state of a dynamical system, evolve along its trajectories.  Suppose the inputs $u(t_j)$ are bounded observations of invertible autonomous dynamics on an invariant set. The discrete-time Koopman operator $\mathcal{K}$ advances observables $\psi(u(t_j))$ forward by one sampling interval $h$ to $\psi(u(t_{j+1}))$ while the backward Koopman operator $\mathcal{K}^{-1}$ describes their evolution backwards in time. Under the stated assumptions, the current state of the observed system determines the entire past input history. Consequently, exponentially filtering this history defines a vector of state observables. To see this explicitly, observe that the filter \eqref{eq:b1} includes only inputs after $t_0 = 0$. Extending the sum backwards to infinity and subtracting the contribution from inputs at and before $t_0$ gives the identity 
{\small 
\begin{equation*} 
b_1(t_j) =\frac{1-\alpha}{1-\omega}
\sum_{k=0}^{\infty}\alpha^k u(t_j-kh)\ 
-\alpha^j\frac{1-\alpha}{1-\omega}
\sum_{k=0}^{\infty}\alpha^k u(-kh).
\end{equation*}
}
\noindent 
The first term equals $\psi(u(t_j))$. The second term is the transient due to the zero initialization of the reservoir; its sum is bounded and independent of $j$, so its contribution decays as $\alpha^j$. Thus $b_1(t_j)=\psi(u(t_j))$ up to an exponentially decaying error, and $A_b$ is the least-squares fit of the one-step backward evolution $\psi(u(t_{j-1}))=(\mathcal{K}^{-1}\psi)(u(t_j))$, i.e., an extended DMD approximation of $\mathcal{K}^{-1}$ with dictionary $\psi$ \cite{williams2015data}. When initialization effects are neglected (as is standard, since initial transients are discarded before training), the eigenvalues $\mu_i$ of $A_b$ are exactly the reciprocals of Koopman eigenvalues whenever the span of the components of $\psi$ is $\mathcal{K}$-invariant, and approximate them otherwise \cite{tu2014DMD,williams2015data}, while the eigenvectors of $\mathcal{K}^{-1}$ and consequently of $A_b$ coincide with those of the forward Koopman operator $\mathcal{K}$. The right eigenvectors $y_i \in \mathbb{R}^d$ of $A_b$ are the corresponding (approximate) Koopman modes of the observable $\psi$, and their images $v_i = W_{\mathrm{in}} y_i \in \mathbb{R}^n$ are the associated activity patterns across reservoir states. 

\subsection{Feedback Perspective on Reservoir Dominance}
\label{subsec:feedback_perspective}

Finally, we provide an input-output characterization of the reservoir training, which offers a complementary interpretation for the emergence of dominant invariant subspaces.
Closing the loop of the 
RC system introduces an output-feedback term $u(t) = y(t) = W_{\mathrm{out}} r(t)$.
In the spirit of root-locus analysis, the  closed-loop spectrum can be graphically visualized as the output weight matrix $W_{\mathrm{out}}$ changes during training.
We illustrate this idea for 
a multi-input multi-output (MIMO) reservoir, using trajectories generated by the Goldbeter system as training data.
The readout matrix is numerically recomputed on each data prefix
$\{t_1,\dots,t'\}$, producing $W_{\mathrm{out}}(t')$.
This yields
\begin{equation*}
J_{\mathrm{T}}(t')=J_0+W_{\mathrm{in}} \, W_{\mathrm{out}}(t'),
\qquad t'=t_1,\dots,t_m,
\end{equation*}
with $t_j = j h$,
and the plot of $\lambda[J_{\mathrm{T}}(t')]$ visualizes how the closed-loop spectrum evolves as more training data is used.

\textit{Example (MIMO Goldbeter input).}
We simulate the classical Goldbeter oscillator system \cite{Goldbeter1995}.
The Goldbeter system has $d=5$ state variables, so $W_{\mathrm{in}} W_{\mathrm{out}}$ has rank at most $5$. In simulation, at most five eigenvalue branches of $J_{\mathrm{T}}(t')$ separate from the open-loop cluster $\lambda[J_0]$, while the remaining $n-5$ eigenvalues stay fixed (Fig.~\ref{fig:rootLocus_linearRC_MIMO_Wout-t}).

Fig.~\ref{fig:rootLocus_linearRC_MIMO_Wout-t} also illustrates convergence to the infinite-data limit described by 
$C_{rr}=W_{\mathrm{in}}C_{b_1b_1}W_{\mathrm{in}}^\top$, where $C_{b_1b_1} = \frac{1}{T} \int_0^T b_1(t)b_1(t)^\top\,dt$ has rank $d=5$ as the Goldbeter example. Since $W_{\mathrm{in}}$ here has full column rank,
$\operatorname{rank}(C_{rr})=d=5$.
For finite training time, the covariance identity holds approximately, with an error that reduces as $T$ increases.  Fig.~\ref{fig:rootLocus_linearRC_MIMO_Wout-t} shows $q=5$ eigenvalues approaching the imaginary axis as more training data is used. Some initially overshoot and return, while the real eigenvalue continues to approach the imaginary axis.
The real part of the eigenvalues of the fully trained reservoir are very close to $0$, but still negative. This suggests that, in the linear setting, training drives the dominant mode as close as possible to the stability boundary while still maintaining stability. Intuitively, if these eigenvalues were to cross into the positive right half-plane, their associated modes would grow exponentially, causing instability 
and failing to match the (bounded) training data. Furthermore, in simulation we observe that even after adding  a redundant extra input channel that duplicates one of the input streams, the shift matrix $W_{\mathrm{in}} W_{\mathrm{out}}(t)$
has rank $5$, and five eigenvalues move away from the open-loop spectrum. Therefore the dominance of the trained reservoir captures the rank of the limiting reservoir second-moment matrix, i.e. the dimension of the reservoir subspace excited by the training data.

\begin{figure}
    \centering
    \includegraphics[width=1\linewidth, trim=0 0mm 0 0mm, clip]{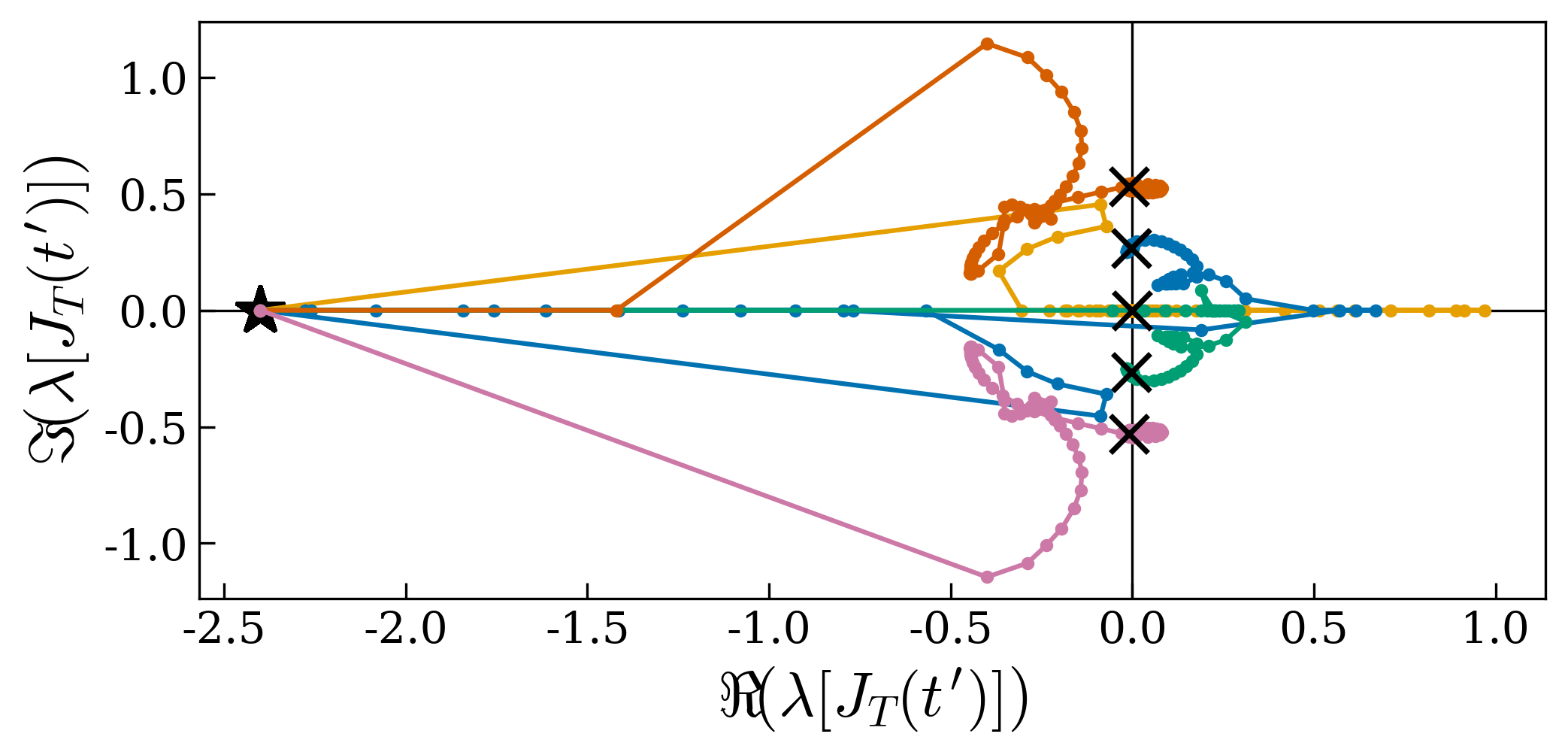}
    \vspace{-3mm}
    \caption{%
    Eigenvalues of $J_{\mathrm{T}}(t')$ for the diagonal linear reservoir ($W=\omega I$) trained on Goldbeter data ($d=5$). The black star marks the open-loop eigenvalue $\gamma(\omega-1)=-2.4$ (all $n$ eigenvalues of $J_0$ coincide). Colored paths show the five shifted eigenvalues as $t'$ increases, with a black $\times$ at the last training time. Parameters: $n=100$, $h=0.167$, $m=630$, $\gamma=11.99$, $\sigma_b=0$, $\omega=0.8$, $\beta=0$. 
    }
    \label{fig:rootLocus_linearRC_MIMO_Wout-t}
\end{figure}

\section{Towards Understanding Dominance of Nonlinear Reservoirs} 
\label{sec:nonlinear_RC}

Dominance analysis offers a natural generalization of our results to nonlinear reservoirs. 
Recall that, 
in the linear case, training adds to the fixed pre-training state matrix the low-rank term $W_{\mathrm{in}}W_{\mathrm{out}}$, so the spectrum of the trained reservoir follows directly from that of $W_{\mathrm{out}}W_{\mathrm{in}}$ by Theorem~\ref{thm:trained_reservoir_trajectories}. 
In a nonlinear reservoir, the Jacobian is state-dependent, so spectral analysis at a single operating point is insufficient. Differential dominance theory \cite{forni2019differential} provides the appropriate framework: it characterizes dominance through the tangent dynamics along trajectories, using the prolonged system~\eqref{eq:prolonged_system}.


Consider the reservoir \eqref{eq:RC_before_training} with $f=\tanh$, $W=\omega I$ with $\omega<1$, $\sigma=\sigma_b\mathbf{1}_n$, $\gamma=1$, and $W_{\mathrm{in}}$ of full column rank, together with its trained closed loop \eqref{eq:RC_after_training}. 
The Jacobian of the reservoir before training is
\begin{equation}
    J_{\text{0}}(r) = -I + \omega  \operatorname{diag}\!\Big(\operatorname{sech}^{2}\!\big(\omega r+\sigma_b\mathbf{1}_n\big)\Big).
    \label{eq:nonlin_J_bef}
\end{equation}
Since $\operatorname{sech}(x) \in (0,1]$ and $\omega<1$, eigenvalues of \eqref{eq:nonlin_J_bef} at any point $r \in \mathbb{R}^n$ are in the left half-plane.
After closing the loop, the Jacobian is
\begin{equation}
    J_{\text{T}}(r) = -I + \omega  \operatorname{diag}\!\Big(\operatorname{sech}^{2}\!\big(\omega r + \sigma_b\mathbf{1}_n\big) \Big)
    + W_{\text{in}} W_{\text{out}}.
    \label{eq:nonlin_J_T}
\end{equation}
As in the linear case, the low-rank term $W_{\mathrm{in}}W_{\mathrm{out}}$ added by the training generically shifts a subset of the Jacobian eigenvalues toward the right half-plane. Moreover, the bounded nonlinearity enables the dominant eigenvalues to cross far into the right half-plane, beyond marginal stability.
Numerically we see this is also true for a reservoir with a general matrix $W$ (Fig.~\ref{fig:Evals_linear_vs_nonlinear}).
Notice that the number of dominant modes is not restricted to the input dimension, as shown in Theorem \ref{thm:lin_res_dom_subspace}.
This yields an improved forecast horizon compared to the linear case (Fig.~\ref{fig:pred_linear_nonlinear_RC}).
%
%

\begin{figure}
\centering
\includegraphics[width=1\linewidth, trim=0 18.5mm 0 20mm, clip]{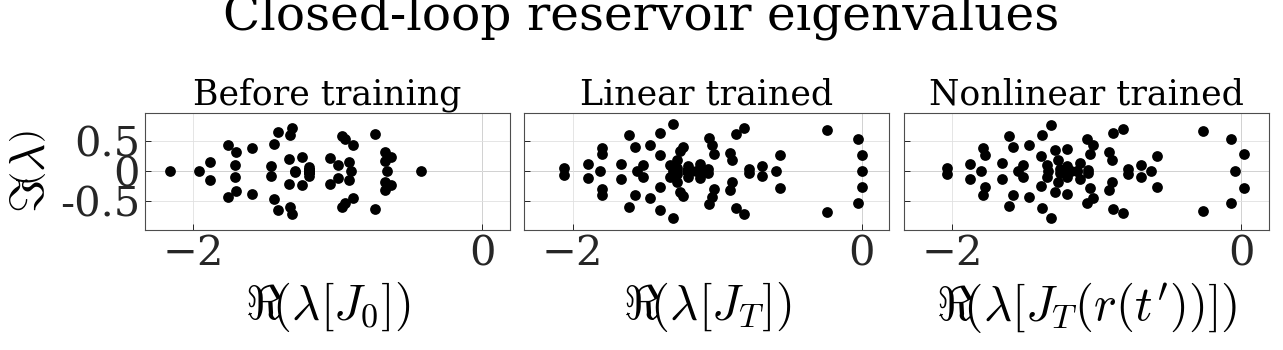}
\vspace{2mm}
\includegraphics[width=1\linewidth, trim=0 0mm 0 28.7mm, clip]{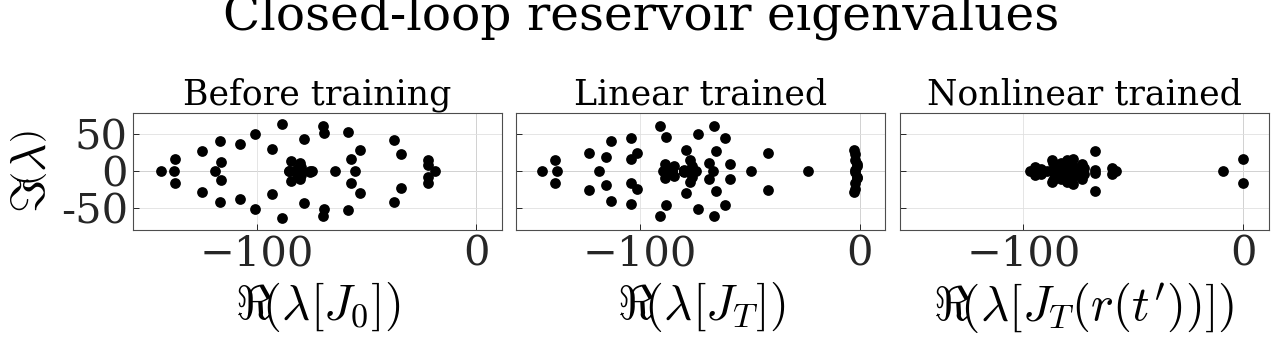}
\vspace{-8mm}
\caption{
Comparison of the evolution of RC spectrum (black points) with a general reservoir matrix $W$, at some point $r_{\mathrm{fixed}}$ for: Goldbeter oscillator (top) vs Lorenz attractor with $d=3$ (bottom), before training (left), linear RC after training (middle), and nonlinear RC after training at some time $t'$ (right).
The reservoirs were initialized and trained with the same parameters as in Fig. \ref{fig:rootLocus_linearRC_MIMO_Wout-t} for Goldbeter model with $m_f=350$, and with $n=100$, $h=0.01$, $m=10000$, $m_f=600$, $\gamma=80$, $\sigma_b=0.6$, and $\omega=0.8$ for the Lorenz model.
Both models use $\beta=10^{-4}$.
}
\label{fig:Evals_linear_vs_nonlinear}
\end{figure}

\begin{figure}
\centering
\begin{tabular}{@{}c@{\hspace{0mm}}c@{}}
\raisebox{3.4cm}{(a)} & 
\includegraphics[width=0.97\linewidth, trim=6mm 11mm 0 9mm, clip]{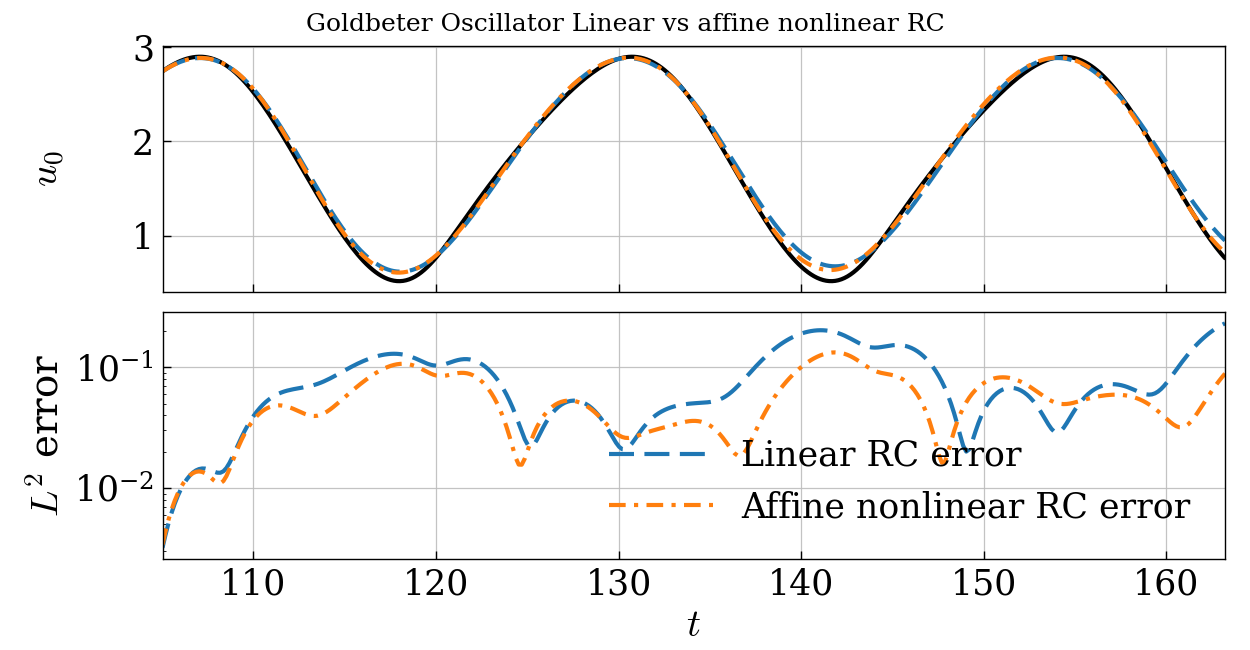}
\end{tabular}
\vspace{2mm}
\begin{tabular}{@{}c@{\hspace{0mm}}c@{}}
\raisebox{3.6cm}{(b)} & 
\includegraphics[width=0.97\linewidth, trim=6mm 0mm 0 9mm, clip]{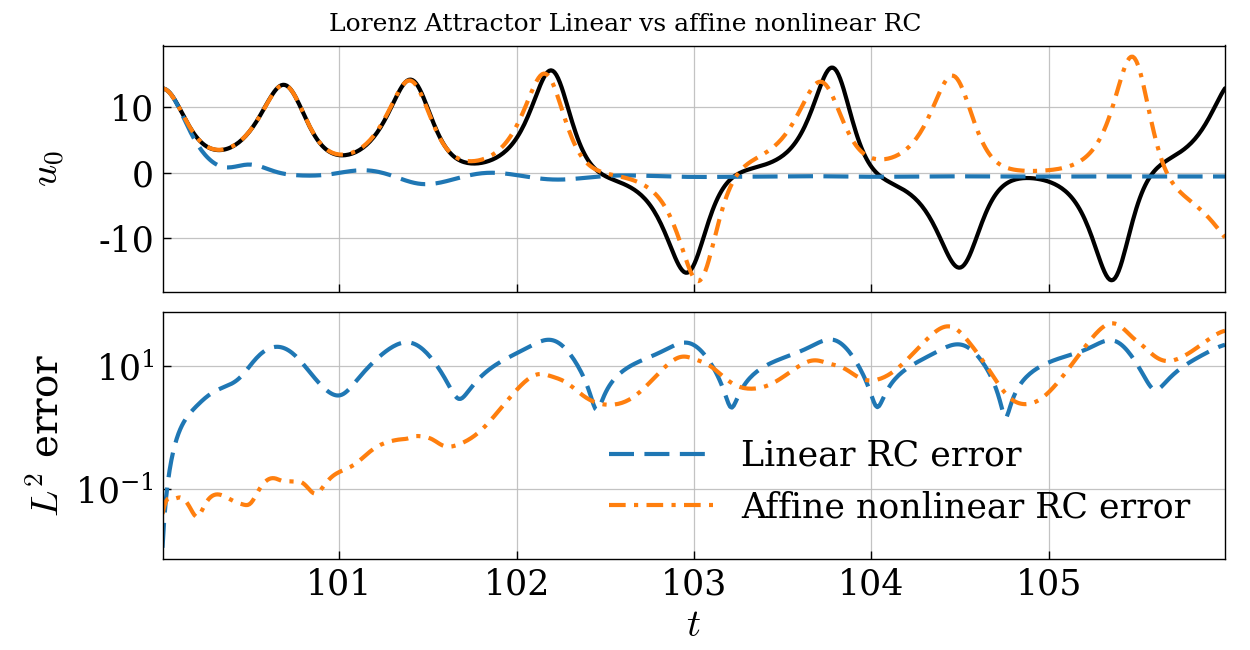}
\end{tabular}
\vspace{-4mm}
\caption{
Comparison of the prediction of linear (blue dashed) and nonlinear (orange dotted) RC with a general reservoir matrix $W$, of the Goldbeter oscillator (a) and Lorenz attractor (b), with ground truth (black), for the system variable $u_0$, together with the corresponding $L^2$ prediction error $\|u(t)-y(t)\|_2$ over all system variables. 
The reservoirs were initialized and trained with the same parameters as in Fig.~\ref{fig:Evals_linear_vs_nonlinear}.
}
\label{fig:pred_linear_nonlinear_RC}
\end{figure}

\subsection{Differential Dominance Analysis}

We now connect the linear and nonlinear analyses using $p$-dominance. In both settings, the key observation is that training adds the low-rank feedback term $W_{\mathrm{in}}W_{\mathrm{out}}$ to the open-loop dynamics, and this term is responsible for creating a spectral splitting between dominant and transient directions.

In the linear case, $p$-dominance of the trained system \eqref{eq:RC_linear_trained} with rate $\lambda \geq 0$ is equivalent to the existence of a symmetric matrix $P$ with inertia $(p,0,n-p)$ satisfying
\begin{equation*}
(J_0 + W_{\mathrm{in}}W_{\mathrm{out}})^\top P + P (J_0 + W_{\mathrm{in}}W_{\mathrm{out}}) \leq -2\lambda P -\varepsilon I,
\end{equation*}
for some $\varepsilon>0$, as in Definition~\ref{def:p_dom}.
In the nonlinear case, differential $p$-dominance of 
the trained system
requires the same inequality to hold pointwise: there must exist a constant $P$ with inertia $(p,0,n-p)$ such that, for all $r \in \mathbb{R}^n$,
\begin{equation*}
(J_0(r) + W_{\mathrm{in}}W_{\mathrm{out}})^\top P + P (J_0(r) + W_{\mathrm{in}}W_{\mathrm{out}}) + 2\lambda P \leq -\varepsilon I,
\end{equation*}
as in Definition~\ref{def:diff_p_dom}. This is more demanding than the linear condition, since $J_0(r)$ varies with $r$ and the inequality must hold uniformly. However, the boundedness of $\operatorname{sech}^2(\cdot)$ in \eqref{eq:nonlin_J_bef} confines $J_0(r)$ to a compact set, which makes the uniform condition potentially tractable. In both cases, training can be interpreted as a feedback design mechanism that reshapes the spectrum inducing a splitting between dominant and transient modes. Conditions under which training provably induces $p$-dominance are the subject of current research.

\section{Discussion}
\label{sec:discussion}

In the presented work we characterized how low-rank feedback learned during training generates dominant subspaces in linear reservoirs. For general linear reservoir constructions, we showed  that training induces dominant modes in the reservoir whose eigenvalues settle on the imaginary line in the infinite data limit. We then proved equivalence between the dimension of the emergent dominant subspaces and the rank of the limiting reservoir second-moment matrix. For diagonal reservoirs, we derived explicit spectral formulas and an exact backward DMD representation, connecting the trained dominant modes to a finite-dimensional Koopman approximation. In future work, we will expand the connection to Koopman theory beyond diagonal reservoirs, and extend our analysis to nonlinear reservoirs through differential $p$-dominance.

\section*{Acknowledgements}
The authors acknowledge the use of OpenAI ChatGPT Astra. 
The authors take full responsibility for the accuracy, originality, and integrity of 
the manuscript.

\bibliographystyle{./bibliography/IEEEtran}
\bibliography{./bibliography/references}

\end{document}